\def\eqref#1{equation~\ref{#1}}
\def\1{\bm{1}}
\def\vmu{{\bm{\mu}}}
\def\vg{{\bm{g}}}
\def\vw{{\bm{w}}}
\def\vx{{\bm{x}}}
\DeclareMathAlphabet{\mathsfit}{\encodingdefault}{\sfdefault}{m}{sl}
\SetMathAlphabet{\mathsfit}{bold}{\encodingdefault}{\sfdefault}{bx}{n}
\newcommand{\loss}{\ell}
\newcommand{\minibatch}{\mathcal{M}}
\newcommand{\dkls}[3]{\mathbb{D}_{KL}^{#1}[#2 \, \|\, #3]}
\newcommand\cut[1]{}
\newcommand{\squishlist}{
   \begin{list}{$\bullet$}
    { \setlength{\itemsep}{0pt}      \setlength{\parsep}{3pt}
      \setlength{\topsep}{3pt}       \setlength{\partopsep}{0pt}
      \setlength{\leftmargin}{1.5em} \setlength{\labelwidth}{1em}
      \setlength{\labelsep}{0.5em} } }
\newcommand{\squishlisttwo}{
   \begin{list}{$\bullet$}
    { \setlength{\itemsep}{0pt}    \setlength{\parsep}{0pt}
      \setlength{\topsep}{0pt}     \setlength{\partopsep}{0pt}
      \setlength{\leftmargin}{2em} \setlength{\labelwidth}{1.5em}
      \setlength{\labelsep}{0.5em} } }
\newcommand{\squishend}{
    \end{list}  }
\newcommand{\half}{\mbox{$\frac{1}{2}$}}
\newcommand{\sqr}[1]{\left[#1\right]}
\newcommand{\myexpect}{\mathbb{E}}
\newcommand{\myvecsym}[1]{\mbox{$\boldsymbol{#1}$}}
\newcommand{\vepsilon}{\mbox{$\myvecsym{\epsilon}$}}
\newcommand{\vlambda}{\mbox{$\myvecsym{\lambda}$}}
\newcommand{\calD}{\mbox{${\cal D}$}}
\newcommand{\data}{\calD}
\newcommand{\be}{\begin{equation}}
\newcommand{\ee}{\end{equation}}
\newcommand{\bea}{\begin{eqnarray}}
\newcommand{\eea}{\end{eqnarray}}
\newcommand{\beaa}{\begin{eqnarray*}}
\newcommand{\eeaa}{\end{eqnarray*}}
\newtheorem{lemma}{Lemma}
\newtheorem{proof}{Proof}
\providecommand{\openbox}{\leavevmode
  \hbox to.77778em{%
  \hfil\vrule
  \vbox to.675em{\hrule width.6em\vfil\hrule}%
  \vrule\hfil}}
\DeclareRobustCommand{\qed}{%
  \ifmmode
    \eqno \def\@badmath{$$}
    \let\eqno\relax \let\leqno\relax \let\veqno\relax
    \hbox{\openbox}%
  \else
    \leavevmode\unskip\penalty9999 \hbox{}\nobreak\hfill
    \quad\hbox{\openbox}%
  \fi
}
\newcommand\scalemath[2]{\scalebox{#1}{\mbox{\ensuremath{\displaystyle #2}}}} 
\icmltitlerunning{Training Binary Neural Networks using the Bayesian Learning Rule}
\begin{document}

\twocolumn[
\icmltitle{Training Binary Neural Networks using the Bayesian Learning Rule}



\icmlsetsymbol{equal}{*}

\begin{icmlauthorlist}
\icmlauthor{Xiangming Meng}{to}
\icmlauthor{Roman Bachmann\textsuperscript{*}}{too}
\icmlauthor{Mohammad Emtiyaz Khan}{to}
\end{icmlauthorlist}

\icmlaffiliation{to}{RIKEN Center for Advanced Intelligence Project (AIP), Tokyo, Japan.}
\icmlaffiliation{too}{École polytechnique fédérale de Lausanne (EPFL), Lausanne, Switzerland}

\icmlcorrespondingauthor{Mohammad Emtiyaz Khan}{emtiyaz.khan@riken.jp}

\icmlkeywords{Deep learning, Binary neural networks, Bayesian deep learning}

\vskip 0.3in

]



\printAffiliationsAndNotice{}  

\begin{abstract}
Neural networks with binary weights are computation-efficient and hardware-friendly, but their training is challenging because it involves a discrete optimization problem.
Surprisingly, ignoring the discrete nature of the problem and using gradient-based methods, such as the Straight-Through Estimator, still works well in practice.
This raises the question: are there principled approaches which justify such methods?
In this paper, we propose such an approach using the \emph{Bayesian learning rule}.
The rule, when applied to estimate a Bernoulli distribution over the binary weights, results in an algorithm which justifies some of the algorithmic choices made by the previous approaches.
The algorithm not only obtains state-of-the-art performance, but also enables uncertainty estimation for continual learning to avoid catastrophic forgetting.
Our work provides a principled approach for training binary neural networks which justifies and extends existing approaches. 

\end{abstract}

\section{Introduction}
Deep neural networks (DNNs) have been remarkably successful in machine learning but their training and deployment requires a high energy budget and hinders their application to resource-constrained devices, such as mobile phones, wearables, and IoT devices. 
Binary neural networks (BiNNs), where weights and/or activations are restricted to binary values, are one promising solution to address this issue \citep{courbariaux2016binarized,courbariaux2015binaryconnect}.
Compared to full-precision DNNs, e.g., using 32-bits, using BiNNs directly gives a 32 times reduction in the model size.
Further computational efficiency is obtained by using specialized hardware, e.g., by replacing the multiplication and addition operations with the bit-wise {\tt xnor} and {\tt bitcount} operations \citep{rastegari2016xnor,mishra2017wrpn,bethge2020meliusnet}. 
In the near future, BiNNs are expected to play an important role in energy-efficient and hardware-friendly deep learning.

A problem with BiNNs is that their training is much more difficult than their continuous counterpart.
BiNNs obtained by quantizing already trained DNNs do not work well, and it is preferable to optimize for binary weights directly.
Such training is challenging because it involves a discrete optimization problem. Continuous optimization methods such as the Adam optimizer \citep{kingma2014adam} are not expected to perform well or even converge.

Despite such theoretical issues, a method called Straight-Through-Estimator (STE) \citep{bengio2013estimating}, which employs continuous optimization methods, works remarkably well \citep{courbariaux2015binaryconnect}.
The method is justified based on ``latent'' real-valued weights which are discretized at every iteration to get binary weights.
The gradients used to update the latent weights, however, are computed at the binary weights (see \autoref{fig:binn_loop} (a) for an illustration). It is not clear why these gradients help the search for the minimum of the discrete problem \citep{yin2019understanding, alizadeh2018empirical}. 
Another recent work by \citet{helwegen2019latent} dismisses the idea of latent weights, and proposes a new optimizer called Binary Optimizer (Bop) based on \emph{inertia}.
Unfortunately, the steps used by their optimizers too are derived based on intuition and are not theoretically justified using an optimization problem.
Our goal in this paper is to address this issue and propose a principled approach to justify the algorithmic choices of these previous approaches.

\begin{figure*}[t]
\begin{center}
\includegraphics[scale=0.44]{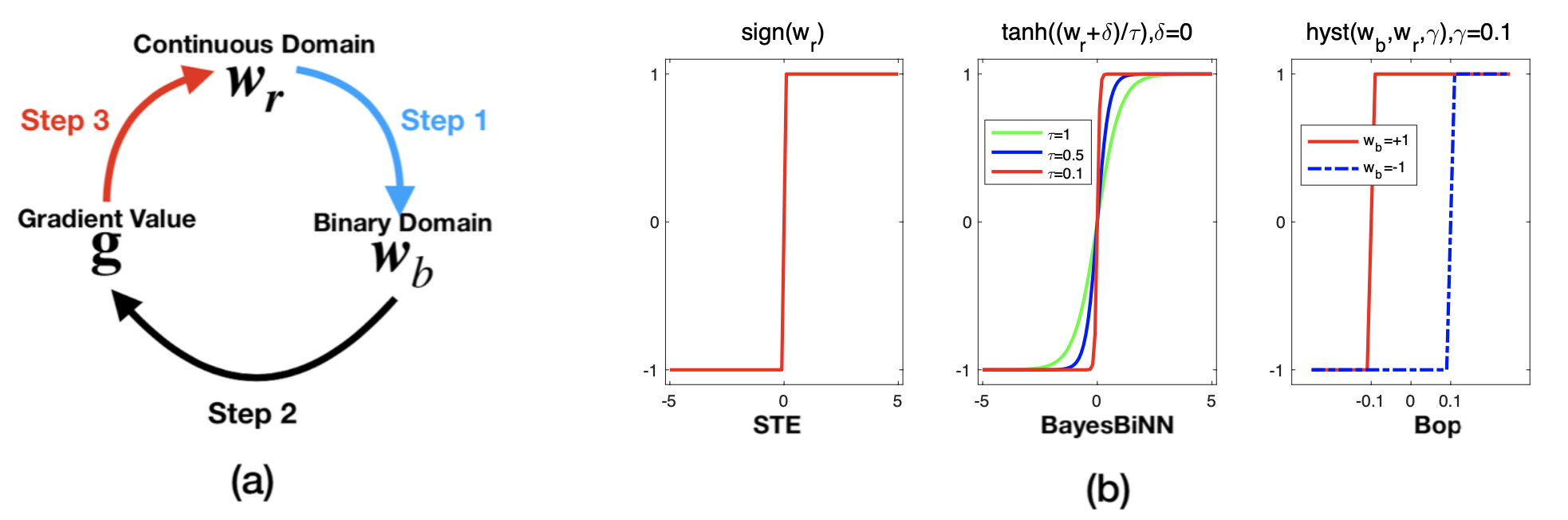}
\end{center}
\caption{ (a): The three steps involved in training BiNNs. In step 1, we obtain binary weights $\vw_b$ from the real-valued parameters $\vw_r$. In step 2, we compute gradients at $\vw_b$ and, in step 3, update $\vw_r$. (b): Different functions used to convert continuous parameters to binary weights. From left to right: Sign function used in STE; Tanh function used in BayesBiNN; Hysteresis function (see (\ref{eq:hyst})) used in Bop.}
\label{fig:binn_loop}
\end{figure*}

\begin{table*}[!htbp] 
\begin{center}
\begin{tabular}{@{}l|l|l|l@{}}
\toprule
& \textbf{STE} & \textbf{Our BayesBiNN method} & \textbf{Bop} \tabularnewline
\midrule
Step 1: Get $\vw_b$ from $\vw_r$ & \textcolor{blue}{ $\vw_{b} \leftarrow \text{sign}(\vw_r)$ } &  \textcolor{blue}{$\vw_{b} \leftarrow \tanh{\left((\vw_r+\boldsymbol{\delta})/\tau\right)}$}  &  \textcolor{blue}{$\vw_{b} \leftarrow \text{hyst}(\vw_r,\vw_{b}, \gamma)$} \\

Step 2: Compute gradient at $\vw_b$ & $\mathbf{\vg} \leftarrow 
         \nabla_{{\vw_{b}}} \loss(y, f_{\vw_{b}}(\vx))$ &
         $\mathbf{\vg} \leftarrow 
         \nabla_{{\vw_{b}}} \loss(y, f_{\vw_{b}}(\vx))$ & 
         $\mathbf{\vg} \leftarrow  \nabla_{{\vw_{b}}} \loss(y, f_{\vw_{b}}(\vx))$ \\
        
Step 3: Update $\vw_r$ & \textcolor{red} {$\vw_r \leftarrow  \vw_r - \alpha\mathbf{\vg}$} &  \textcolor{red}{$\vw_r \leftarrow  (1-\alpha)\vw_r - \alpha\mathbf{s} \odot \mathbf{\vg}$} & \textcolor{red}{$\vw_r \leftarrow  (1-\alpha)\vw_r - \alpha \mathbf{\vg}$} \tabularnewline

\bottomrule
\end{tabular}
\end{center}
\caption{This table compares the steps of our algorithm BayesBiNN to the two existing methods, STE \citep{bengio2013estimating} and Bop \citep{helwegen2019latent}. Here, $\vw_b$ and $\vw_r$ denote the binary and real-valued weights. For step 1, where $\vw_b$ are obtained from $\vw_r$, STE uses the sign of $\vw_r$ while BayesBiNN uses a $tanh$ function with a small noise $\delta$ sampled from a Bernoulli distribution and a temperature parameter $\tau$. As \autoref{fig:binn_loop} (b) shows, as $\tau$ goes to 0, Step 1 of BayesBiNN becomes equal to that of STE. Step 1 of Bop uses the \emph{hysteresis} function shown in \autoref{fig:binn_loop} (b) and becomes similar to sign function as the threshold $\gamma$ goes to 0 (it is flipped but the sign is irrelevant for binary variables). Step 2 is the same for all algorithms. Step 3 of BayesBiNN is very similar to Bop, except that a scaling $\mathbf{s}$  is used (which is similar to the adaptive learning rate algorithms); see (\ref{eq:scaling_s_def}) in Section \ref{sec:BayesBiNN}.}
\label{table:Psedo-code}
\end{table*}

In this paper, we present a Bayesian perspective to justify previous approaches.
Instead of optimizing a discrete objective, the Bayesian approach relaxes it by using a distribution over the binary variable, resulting in a principled approach for discrete optimization.
We use a Bernoulli approximation to the posterior and estimate it using a recently proposed approximate Bayesian inference method called the Bayesian learning rule \citep{khan2017conjugate,emti2020bayesprinciple}.
This results in an algorithm which justifies some of the algorithmic choices made by existing methods; see \autoref{table:Psedo-code} for a summary of results.
Since our algorithm is based on a well-defined optimization problem, it is easier to extend its application.
We show an application for continual learning to avoid catastrophic forgetting \cite{Kirkpatrick2016OvercomingCF}. 
To the best of our knowledge, there is no other work on continual learning of BiNNs so far, perhaps because extending existing methods, like STE, for such tasks is not trivial.
Overall, our work provides a principled approach for training BiNNs that justifies and extends previous approaches. The code to reproduce the results is available at \url{https://github.com/team-approx-bayes/BayesBiNN}.

\subsection{Related Works}
There are two main directions on the study of BiNNs: one involves the design of special network architecture tailored to binary operations \citep{courbariaux2015binaryconnect, rastegari2016xnor, lin2017towards, bethge2020meliusnet} and the other is on the training methods. The latter is the main focus of this paper.

Our algorithm is derived using the Bayesian learning rule recently proposed by \citet{khan2017conjugate,emti2020bayesprinciple}, which is obtained by optimizing the Bayesian objective by using natural gradient descent \citep{amari1998natural,hoffman2013stochastic,khan2017conjugate}. It is shown in \citet{emti2020bayesprinciple} that the Bayesian learning rule can be used to derive and justify many existing learning-algorithms in fields such as optimization, Bayesian statistics, machine learning and deep learning. 
In particular, the Adam optimizer can also be derived as a special case \citep{khan2018fasta,osawa2019practical}.
Our application is yet another example where the rule is used to justify existing algorithms that perform well in practice but whose mechanisms are not well understood.

Instead of using the Bayesian learning rule, it is possible to use other types of variational inference methods, e.g., \citet{shayer2017learning,peters2018probabilistic} used a  \emph{variational optimization} approach \citep{Staines2019variationaloptimization} along with the local reparameterization trick. The Gumbel-softmax trick \citep{maddison2016concrete,jang2016categorical} is also used in \citet{louizos2018relaxed} to train BiNNs. However, instead of specifying a Bernoulli distribution over the weights, \citet{louizos2018relaxed} construct a noisy quantizer and their optimization objective is different from ours. Unlike our method, none of these methods \citep{shayer2017learning,peters2018probabilistic, louizos2018relaxed} result in an update similar to either STE or Bop.

\section{Training Binary Neural Networks (BiNNs)}
Given $\data=\left\{\left(\vx_{i},y_{i}\right)\right\} _{i=1}^{N}$, the goal is to train a neural network $f_{\vw}(\vx)$ with binary weights $\vw \in \left\{-1,+1\right\}^{W}$, where $W$ is the number of weights. The challenge is in optimizing the following discrete optimization objective:
\begin{equation}
\underset{\vw \in\left\{ -1,1\right\} ^{W}}{\min} \sum_{i\in\data}\loss(y_i, f_\vw(\vx_i)),
\label{eq:discrete_optimization}
\end{equation}
where $\loss(y_i, \hat{y}_i)$ is a loss function, e.g., cross-entropy loss for the model predictions $\hat{y}_i := f_{\vw}(\vx_i)$.
It is clear that binarized weights obtained from pre-trained NNs with real-weights do not minimize (\ref{eq:discrete_optimization}) and therefore not expected to give good performance.
Optimizing the objective with respect to binary weights is difficult since gradient-based methods cannot be directly applied.
The gradient of the real-valued weights are not expected to help in the search for the minimum of a discrete objective \citep{yin2019understanding}.

Despite such theoretical concerns, the Straight-Through Estimator (STE) \citep{bengio2013estimating}, which utilizes gradient-based methods, works extremely well.
There have been many recent works that build upon this method, including BinaryConnect \citep{courbariaux2015binaryconnect}, Binarized neural networks \citep{courbariaux2016binarized}, XOR-Net \citep{rastegari2016xnor}, as well as the most recent MeliusNet \citep{bethge2020meliusnet}.
The general approach of such methods is shown in \autoref{fig:binn_loop} in three steps. 
In step 1, we obtain binary weights $\vw_b$ from the real-valued parameters $\vw_r$. In step 2, we compute gradients at $\vw_b$ and, in step 3, update $\vw_r$ using the gradients. 
STE makes a particular choice for the step 1 where a sign function is used to obtain the binary weights from the real-valued weights (see \autoref{table:Psedo-code} for a pseudocode).
However, since the gradient of the sign function with respect to $\vw_r$ is zero almost everywhere, it implies that $\nabla_{\vw_r}\approx \nabla_{\vw_{b}}$. This approximation can be justified in some settings \citep{yin2019understanding} but in general the reasons behind its effectiveness are unknown.

Recently \citet{helwegen2019latent} proposed a new method that goes against the justification behind STE. They argue that ``latent'' weights used in STE based methods do not exist. Instead, they provide a new perspective: the sign of each element of $\vw_r$ represents a binary weight while its magnitude encodes some inertia against flipping the sign of the binary weight. With this perspective, they propose the Binary optimizer (Bop) method which keeps track of an exponential moving average \citep{gardner1985exponential} of the gradient $\mathbf{\vg}$ during the training process and then decide whether to flip the sign of the binary weights when they exceed a certain threshold $\gamma$. The Bop algorithm is shown in \autoref{table:Psedo-code}. However, derivation of Bop is also based on intuition and heuristics. It remains unclear why the exponential moving average of the gradient is used in Step 3 and what objective the algorithm is optimizing. The selection of the threshold $\gamma$ is another difficult choice in the algorithm.   

Indeed, Bayesian methods do present a principled way to incorporate the ideas used in both STE and Bop. For example, the idea of ``generating'' binary weights from real-valued parameters can be though of as sampling from a discrete distribution with real-valued parameters. In fact, the sign function used in STE is related to the ``soft-thresholding'' used in machine learning. Despite this there exist no work on Bayesian training of BiNNs that can give an algorithm similar to STE or Bop. In this work, we fix this gap and show that, by using the Bayesian learning rule, we recover a method that justifies some of the steps of STE and Bop, and enable us to extend their application. We will now describe our method.

\section{BayesBiNN: Binary NNs with Bayes}
\label{sec:BayesBiNN}

We will now describe our approach based on a Bayesian formulation of the discrete optimization problem in (\ref{eq:discrete_optimization}). A Bayesian formulation of a loss-based approach can be written as the following minimization problem with respect to a distribution $q(\vw)$ \citep{zellner1988optimal,bissiri2016general}
\begin{align}
\myexpect_{q(\vw)} \sqr{ \sum_{i=1}^N\loss(y_i, f_{\vw}(\vx_i)) }  + \dkls{}{q(\vw)}{p(\vw)}, \label{eq:ELBO}
\end{align}
where $p(\vw)$ is a prior distribution and $q(\vw)$ is the posterior distribution or its approximation. The formulation is general and does not require the loss to correspond to a probabilistic model. When the loss  indeed corresponds to a negative log likelihood function, this minimization results in the posterior distribution which is equivalent to Bayes' rule \cite{bissiri2016general}. When the space of $q(\vw)$ is restricted, this results in an approximation to the posterior, which is then equivalent to variational inference \citep{jordan1999introduction}.
For our purpose, this formulation enables us to derive an algorithm that resembles existing methods such as STE and Bop.

\subsection{BayesBiNN optimizer}
To solve the optimization problem (\ref{eq:ELBO}), the Bayesian learning rule \citep{emti2020bayesprinciple} considers a class of minimal exponential family distributions \citep{WainwrightJordan08}
\begin{align}
q\left(\vw\right) :=  h\left(\vw\right)\exp{\left[\vlambda^T\phi(\vw)-A(\vlambda)\right]},
\label{eq:exponential_family}
\end{align}
where $\vlambda$ is the natural parameter, $\phi(\vw)$ is the vector of sufficient statistics, $A(\vlambda)$ is the log-partition function, and $h\left(\vw\right)$ is the base measure.
When the prior distribution $p(\vw)$ belongs to the same exponential-family as $q\left(\vw\right)$ in (\ref{eq:exponential_family}), and the base measure $h(\vw)=1$, the Bayesian learning uses the following update of the natural parameter \citep{khan2017conjugate,emti2020bayesprinciple}
\begin{equation}
\scalemath{0.86}{
\boldsymbol{\lambda} \leftarrow (1-\alpha)\boldsymbol{\lambda}- \alpha\left\{\nabla_{\vmu}\myexpect_{q(\vw)}\left[\sum_{i=1}^N\loss(y_i, f_{\vw}(\vx_i))\right]-\vlambda_0\right\}
},
\label{eq:BayesLearningRule}
\end{equation}
where $\alpha$ is the learning rate,  $\vmu=\myexpect_{q(\vw)}\sqr{\phi\left(\vw\right)}$ is the expectation parameter of $q\left(\vw\right)$, and $\vlambda_0$ is the natural parameter of the prior distribution $p(\vw)$, which is assumed to belong to the same exponential-family as $q(\vw)$. The Bayesian learning rule is a natural gradient algorithm \citep{khan2017conjugate,emti2020bayesprinciple}. 
An interesting point of this rule is that the gradient is computed with respect to the expectation parameter $\vmu$, while the update is performed on the natural parameter $\boldsymbol{\lambda}$. 
This particular choice leads to an update similar to STE for BiNNs, as we show next.

We start by specifying the form of $p(\vw)$ and $q(\vw)$.
A priori, we assume that the weights are equally likely to be either $-1$ or $+1$, i.e., the prior $p(\vw)$ is a (symmetric) Bernoulli distribution with a probability of $\half$ for each state.
For the posterior approximation $q(\vw)$, we use the mean-field (symmetric) Bernoulli distribution 
\begin{align}
q\left(\vw\right) & =\prod_{j=1}^{W} p_j^{\frac{1+w_{j}}{2}}\left(1-p_j\right)^{\frac{1-w_{j}}{2}} ,
 \label{eq:Ber_distrib}
\end{align}
where $p_j$ is the probability that $w_j=+1$, and $W$ is the number of parameters. 
Our goal is to learn the parameters $p_j$ of the approximations. 
The Bernoulli distribution defined in (\ref{eq:Ber_distrib}) is a special case of the \emph{minimal} exponential family distribution, where the corresponding natural and expectation parameters of each weight $w_i$ are 
\begin{align}
\lambda_{j} & :=\frac{1}{2}\log\frac{p_{j}}{1-p_{j}},\quad\quad
\mu_{j} := 2p_{j}-1. 
\label{eq:natural_param_def}
\end{align}
The natural parameter $\vlambda_0$ of the prior $p(\vw)$ is therefore $\boldsymbol{0}$. Using these definitions, we can directly apply the Bayesian learning rule to learn the posterior Bernoulli distribution of the binary weights. 

In addition to these definitions, we also require the gradient with respect to $\vmu$ to implement the rule (\ref{eq:BayesLearningRule}). A straightforward solution is to use the REINFORCE method \citep{williams1992simple} which transforms the gradient of the expectation into the expectation of the gradient by using the log-derivative trick, i.e., 
\begin{align}
    \nabla_{\vmu}\mathbb{E}_{q(\vw)} \sqr{ \loss(y, f_\vw(\vx))}  =\mathbb{E}_{q(w)} \sqr{ \loss(y,f_w(\vx)) \nabla_\mu \log q(\vw)}. \nonumber
\end{align}
This method, however, does not use the \emph{minibatch gradient} (the gradient of the loss $\loss(y, f_\vw(\vx))$ on a minibatch of examples), which is essential to show the similarity to STE and Bop. The REINFORCE method also suffers from high variance. Due to these reasons, we do not use this method. 

Instead, we resort to another reparameterization trick for discrete variables called the Gumbel-softmax trick \citep{maddison2016concrete,jang2016categorical}, which, as we will see, leads to an update similar to STE/Bop. The idea of Gumbel-softmax trick is to introduce the \emph{concrete distribution} that leads to a \emph{relaxation} of the discrete random variables. Specifically, as shown in Appendix B in \citep{maddison2016concrete}, for a binary variable $w_j \in\{0,1\}$ with $P(w_j=1) = p_j$, we can use the following relaxed variable $w_r^{\epsilon_j,\tau}(p_j) \in (0,1)$:
\begin{align}
    w_r^{\epsilon_j,\tau}(p_j) := \frac{1}{1+\exp\left(-\frac{2\lambda_j+2\delta_j}{\tau}\right)},
    \label{eq:w_relaxed}
\end{align}
where $\tau>0$ is a temperature parameter, $\lambda_j:=\frac{1}{2}\log\frac{p_j}{1-p_j}$ is the natural parameter, and $\delta_j$ is defined as follows,
\begin{align}
\delta_j := \frac{1}{2}\log\frac{\epsilon_j}{1-\epsilon_j},
\label{eq:delta_def}
\end{align}
with $\epsilon_j \sim \mathcal{U}\left(0,1\right)$ sampled from a uniform distribution.
The $w_r^{\epsilon_j,\tau}(p_j)$ are samples from a Concrete distribution which has a closed-form expression \citep{maddison2016concrete}: 
\begin{align}
&p\left(w_r^{\epsilon_j,\tau}(p_j)\right)\\
&\quad:=\frac{\tau e^{2\lambda}(w_r^{\epsilon_j,\tau}(p_j))^{-\tau-1}\left(1-(w_r^{\epsilon_j,\tau}(p_j))\right)^{-\tau-1}}{\left(e^{2\lambda}(w_r^{\epsilon_j,\tau}(p_j))^{-\tau}+\left(1-(w_r^{\epsilon_j,\tau}(p_j))\right)^{-\tau}\right)^{2}}. \nonumber 
\end{align}
Instead of differentiating the objective with respect to binary variables $w_j$, we can differentiate with respect to $w_r^{\epsilon_j,\tau}(p_j)$. We will use this to approximate the gradient with respect to $\vmu$ in terms of the minibatch gradient.

In our case, entries $w_j$ of $\vw$ take value in $\{+1, -1\}$ rather than in $\{0,1\}$. The relaxed version could be obtained by a linear transformation of the concrete variables $w_r^{\epsilon_j,\tau}(p_j)$ in (\ref{eq:w_relaxed}), i.e., 
\begin{align}
w_{b}^{\epsilon_j,\tau}(\lambda_j) :=2w_r^{\epsilon_j,\tau}(p_j)-1
= \tanh{\left((\lambda_j + \delta_j)/\tau\right)},
\label{eq:Repara}
\end{align}
where, unlike (\ref{eq:w_relaxed}), we have explicitly written the dependency in terms of $\lambda_j$ instead of the vector of $p_j$. 
Since $w_{b}^{\epsilon_j,\tau}(\lambda_j)$ are continuous, we can differentiate them with respect to $\mu_j$ by using the chain rule. 
The lemma below states the result where $\vw_b^{\boldsymbol{\epsilon},\tau}(\boldsymbol{\lambda})$ and $\vepsilon$ denote the vectors of $w_{b}^{\epsilon_j,\tau}(\lambda_j)$ and $\epsilon_j$, respectively, for all $j=1,2,\ldots,W$.
\begin{lemma} \label{lemma:gradient}
By using the Gumbel-softmax trick, we get the following approximation in terms of the minibatch gradient:
\begin{align}
\nabla_{\vmu}\myexpect_{q(\vw)}\left[\sum_{i=1}^N\loss(y_i, f_{\vw}(\vx_i))\right] \approx \mathbf{s} \odot \mathbf{\vg},  
\label{eq:GradientApproximation}
\end{align}
where 
\begin{align}
& \mathbf{\vg}  := \frac{1}{M} \sum_{i\in\minibatch} \left. \nabla_{\vw_r} \loss(y_i, f_{\vw_r}(\vx_i)) \right\vert_{ \vw_r = {\vw_b^{\boldsymbol{\epsilon},\tau}(\boldsymbol{\lambda})}}, \label{eq:scaling_s_def} \\
& \mathbf{s} := \frac{N(1-{(\vw_b^{{\boldsymbol{\epsilon}},\tau}(\vlambda)})^2)}{\tau(1-\tanh{(\vlambda)}^2)},
\label{eq:gradient_minibatch}
\end{align}
and $\minibatch$ is a mini-batch of $M$ examples.
\end{lemma}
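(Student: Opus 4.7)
The plan is to derive (\ref{eq:GradientApproximation}) by combining the Gumbel-softmax relaxation of the Bernoulli expectation with a chain-rule computation through the exponential-family parameterization; the only genuine work is identifying the scaling factor $\mathbf{s}$, and everything else is standard stochastic-gradient bookkeeping. First, I would use the relaxation (\ref{eq:Repara}) to rewrite $\myexpect_{q(\vw)}\sqr{\sum_{i=1}^N \loss(y_i, f_{\vw}(\vx_i))}$ as an expectation over the independent uniform noise variables $\vepsilon$ of $\sum_{i=1}^N \loss(y_i, f_{\vw_b^{\vepsilon,\tau}(\vlambda)}(\vx_i))$. Because the outer measure now depends only on $\vepsilon$ and not on the parameters, the operator $\nabla_{\vmu}$ can be pushed inside the expectation.

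Second, I would apply the chain rule $\nabla_{\vmu} = (d\vlambda/d\vmu)\,\nabla_{\vlambda}$ followed by differentiation through $\vw_b^{\vepsilon,\tau}(\vlambda)$. From (\ref{eq:natural_param_def}) we have $\mu_j = 2p_j - 1 = \tanh(\lambda_j)$, so the Jacobian is diagonal with entries $d\lambda_j/d\mu_j = 1/(1-\tanh(\lambda_j)^2)$. Differentiating the $\tanh$ in (\ref{eq:Repara}) gives $\partial w_b^{\epsilon_j,\tau}/\partial \lambda_j = (1 - (w_b^{\epsilon_j,\tau}(\lambda_j))^2)/\tau$. Composing the two factors coordinate-wise yields exactly $\mathbf{s}_j / N$ from (\ref{eq:gradient_minibatch}), so the per-example chain rule reads $\nabla_{\vmu}\loss(y_i, f_{\vw_b^{\vepsilon,\tau}(\vlambda)}(\vx_i)) = (\mathbf{s}/N) \odot \nabla_{\vw_r}\loss(y_i, f_{\vw_r}(\vx_i))\big\vert_{\vw_r = \vw_b^{\vepsilon,\tau}(\vlambda)}$. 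Summing over $i$, approximating the expectation over $\vepsilon$ by a single reparameterization sample, and replacing $\sum_{i=1}^N$ by the unbiased minibatch surrogate $(N/M)\sum_{i\in\minibatch}$ cancels the $1/N$ and produces $\mathbf{s}\odot\vg$, as claimed.

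The main obstacle, and the source of the $\approx$ in (\ref{eq:GradientApproximation}) rather than an equality, is twofold. The Gumbel-softmax substitution is exact only in the zero-temperature limit $\tau\to 0$, so the identity $\myexpect_{q(\vw)}[\cdot] = \myexpect_{\vepsilon}[\cdot]$ should be interpreted as the continuous relaxation of the original discrete objective rather than as a strict equality at finite $\tau$; and the one-sample Monte Carlo estimate over $\vepsilon$, together with the minibatch substitution, introduces the usual stochastic approximation error. I would flag both sources of approximation at the points where they are invoked, so that the only deterministic content of the lemma is the chain-rule identification of $\mathbf{s}$, which is the algorithmically meaningful quantity that will later be compared with the STE/Bop updates in Table~\ref{table:Psedo-code}.
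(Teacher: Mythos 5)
Your proposal is correct and follows essentially the same route as the paper's proof: reparameterize the Bernoulli expectation via the Gumbel-softmax relaxation, push $\nabla_{\vmu}$ inside the expectation over $\vepsilon$, compute $dw_b^{\epsilon_j,\tau}/d\mu_j$ by the chain rule through $\lambda_j=\tanh^{-1}(\mu_j)$ to identify $\mathbf{s}/N$, and then apply a one-sample, minibatch Monte Carlo estimate. Your explicit accounting of the three sources of the approximation (finite-$\tau$ relaxation, single $\vepsilon$ sample, minibatch subsampling) is slightly more careful than the paper's, but the underlying argument is identical.
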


\begin{proof}
Using (\ref{eq:Repara}), we can first approximate the objective in terms of $\vw_b^{\boldsymbol{\epsilon}, \tau}(\vlambda)$ and $\vepsilon$, and then push the gradient with respect to $\vmu$ inside the expectation as shown below:
\begin{align}
 \nabla_{\vmu}\myexpect_{q(\vw)} &\left[\sum_{i=1}^N\loss(y_i, f_{\vw}(\vx_i))\right] \nonumber \\
 \approx & \myexpect_{q(\boldsymbol{\epsilon})}\left[\sum_{i=1}^N\nabla_{\vmu}\loss(y_i, f_{\vw_b^{\boldsymbol{\epsilon},\tau}(\boldsymbol{\lambda})}(\vx_i))\right]. 
\label{eq:gradient2expectation}
\end{align}

%
Using the chain rule, the $j$-th element of the gradient on the right hand side can be obtained as follows:
\begin{align}
\nabla_{\mu_j}\loss(y_i, f_{\vw_b^{\boldsymbol{\epsilon}, \tau}}(\vx_i)) = \nabla_{w_{b}^{\epsilon_j,\tau}}\loss(y_i, f_{\vw_b^{\boldsymbol{\epsilon}, \tau}}(\vx_i))\frac{d{w_{b}^{\epsilon_j, \tau}(\lambda_j)}}{d\mu_j}.
\end{align}
According to the definition of natural parameter and expectation parameter in (\ref{eq:natural_param_def}), we have $\lambda_j = \frac{1}{2}\log\frac{1 + \mu_j}{1-\mu_j} $, therefore after some algebra we can write: 
\begin{align}
\frac{d{w_{b}^{\epsilon_j, \tau}}}{d\mu_j}
 & =\frac{1-({w}^{\epsilon_j, \tau}_{b}(\lambda_j))^2}{\tau\left(1-\tanh^2(\lambda_j)\right)}.
\label{eq:dz2dmui}
\end{align}
By using a mini-batch $\minibatch$ of $M$ examples and one sample $\boldsymbol{\epsilon}$, (\ref{eq:gradient2expectation})-(\ref{eq:dz2dmui}) give us (\ref{eq:GradientApproximation}). \qed
\end{proof}
Substituting the result of Lemma \ref{lemma:gradient} into the Bayesian learning rule in  (\ref{eq:BayesLearningRule}), we obtain the following update:
\begin{align}
\vlambda \leftarrow  (1-\alpha)\vlambda - \alpha \left[\mathbf{s} \odot \mathbf{\vg}-\vlambda_0\right].
\label{eq:lamda_update}
\end{align}
The resulting optimizer, which we call BayesBiNN, is shown in \autoref{table:Psedo-code}, where we assume $\vlambda_0=\mathbf{0}$ (since the probability of $w_i=+1$ is 1/2 a priori). For the ease of comparison with other methods, the natural parameter $\boldsymbol{\vlambda}$ is replaced with continuous variables $\vw_r$. 

At test-time, we can either use the predictions obtained using Monte-Carlo average (which we refer to  as the ``mean''):
\begin{align}
 \hat{p}_{k} = \frac{1}{C}\sum_{c=1}^{C}p\left(y=k|\vx,\vw^{(c)}\right),   
\end{align}
with $\vw^{(c)} \sim q(\vw)$ and $C$ is the number of samples, or the predictions obtained by using the mode $\hat{\vw }$ of
 of $q(\vw)$:
$\hat{p}_{k} =  p\left(y=k|\vx,\hat{\vw}\right)$,
where $\hat{\vw } = \textrm{sign}(\tanh{\left(\vlambda\right)})$ (which we refer to as the ``mode'').

\subsection{Justification of Previous Approaches} \label{sec:justification}
In this section, we show how BayesBiNN justifies the steps of STE and Bop. A summary is shown in \autoref{table:Psedo-code}. First, BayesBiNN justifies the use of gradient based methods to solve the discrete optimization problem (\ref{eq:discrete_optimization}). As opposed to (\ref{eq:discrete_optimization}), the new objective in (\ref{eq:ELBO}) is over a continuous parameter $\vlambda$ and thus gradient descent can be used. The underlying principle is similar to stochastic relaxation for non-differentiable optimization \citep{lemarechal1989nondifferentiable, geman1984stochastic}, evolution strategies \citep{huning1976evolutionsstrategie}, and variational optimization \citep{Staines2019variationaloptimization}. 

Second, some of the algorithmic choices of previous methods such as STE and Bop are justified by BayesBiNN. Specifically, when the temperature $\tau$ in BayesBiNN is small enough, the $\tanh{(\cdot)}$ function in \autoref{table:Psedo-code} behaves like the $\text{sign}(\cdot)$ function used in STE; see \autoref{fig:binn_loop} (b).  From this perspective, the latent weights $\mathbf{\vw}_{r}$ in STE play a similar role as the natural parameter $\boldsymbol{\lambda}$ of BayesBiNN. In particular, when there is no sampling, i.e., $\delta \leftarrow 0$ in BayesBiNN, the two algorithms will become very similar to each other. BayesBiNN justifies the step 1 used in STE by using the Bayesian perspective. 

BayesBiNN also justifies step 3 of Bop\footnote{Note that the step 3 of Bop in \autoref{table:Psedo-code} is an equivalent but ``flipped'' version of the one used by \citet{helwegen2019latent}; see \autoref{Sec:hyst-function} for details.} in \autoref{table:Psedo-code}, where an exponential moving average of gradients is used. This is referred to as the momentum term in Bop which plays a similar role as the natural parameter in BayesBiNN. In \citet{helwegen2019latent}, the momentum is interpreted as a quantity related to \emph{inertia}, which indicates the strength of the state of weights. Since the natural parameter in the binary distribution (\ref{eq:Ber_distrib}) essentially indicates the strength of the probability being $-1$ or $+1$ for each weight, BayesBiNN provides an alternative explanation for Bop.

A recent mirror descent view  proposed in \citet{ajanthan2019mirror} also interprets the continuous parameters as the dual of the quantized ones. As there is an equivalence between the natural gradient descent and mirror descent \citep{raskutti2015information,khan2017conjugate}, the proposed BayesBiNN also provides an interesting perspective on the mirror descent framework for BiNNs training.

\subsection{Benefits of BayesBiNN}
Apart from justifying previous methods, BayesBiNN has several other advantages. First, since its algorithmic form is similar to existing optimizers, it is very easy to implement BayesBiNN by using existing codebases. Second, as a Bayesian method, BayesBiNN provides uncertainty estimates, which can be useful for decision making.  
The uncertainty obtained using BayesBiNN can enable us to perform continual learning by using the variational continual learning (VCL) framework \citep{nguyen2017variational}, as we discuss next.

In continual learning, our goal is to learn the parameters of the neural network from a
set of sequentially arriving datasets $\data_{t}=\left\{ \left(\vx_{i},y_{i}\right)\right\} _{i=1}^{N_t},t=1,2,\ldots,T$. While training the $t$-th task with dataset $\data_t$, we do not have access to the datasets of past tasks, i.e., $\data_1,\data_2,\ldots,\data_{t-1}$. Training on task $\data_{t}$ using a deep-learning optimizer usually leads to a huge performance loss on the past tasks \citep{Kirkpatrick2016OvercomingCF}. The goal of continual learning is to fix such catastrophic forgetting of the past. 

For full-precision networks, a common approach to solve this problem is to use weight-regularization, e.g., the \emph{elastic weight consolidation} (EWC) method \citep{Kirkpatrick2016OvercomingCF} uses a Fisher information matrix to regularize the weights: 
\begin{align}
\sum_{i \in \data_t }\loss(y_i, f_{\vw}(\vx_i)) + \varepsilon(\vw - \vw_{t-1})^T{\mathbf{F}_t}(\vw - \vw_{t-1}), \label{eq:EWC_loss}
\end{align}
where $\varepsilon$ is a regularization parameter and $\mathbf{F}_t$ is the Fisher information matrix at $\vw_t$. The hope is to keep the new weights close to the old weights, but in a discrete optimization problem, it is impossible to characterize such closeness using a quadratic regularizer as above. Therefore, it is unclear why such a regularizer will be useful. In addition, the use of the Fisher information matrix $\mathbf{F}_t$ typically assumes that the weights are continuous and the matrix does not provide a meaningful quantity for discrete weights. For these reasons, extending existing approaches such as STE and Bop to continual learning is a nontrivial task.

Fortunately, for BayesBiNN, this is very easy because the objective function is well-defined. We use the VCL framework \citep{nguyen2017variational} where we regularize the distributions instead of the weights. The Kullback-Leibler term is used as the regularizer. Denoting by $q_{t-1}(\vw)$ the posterior distribution at task $t-1$, we can replace the prior distribution $p(\vw)$ in (\ref{eq:ELBO}) by $q_{t-1}(\vw)$:
\begin{equation}
\scalemath{0.87}{
\myexpect_{q_{t}(\vw)} \left[ \sum_{i \in \data_t}\loss(y_i, f_w(\vx_i))\right] 
+ \dkls{}{q_{t}(\vw)}{q_{t-1}(\vw)}
}.
\label{eq:ELBO_VCL}
\end{equation}
This leads to a slight modification of the update in BayesBiNN where the prior natural parameter $\vlambda_0$ in (\ref{eq:lamda_update}) is replaced by the natural parameter $\vlambda_{t-1}$ of $q_{t-1}(\vw)$. The new update of the natural parameter $\vlambda_t$ of $q_t(\vw)$ is shown below: 
\begin{align}
\vlambda_t \leftarrow  (1-\alpha)\vlambda_t - \alpha \left[\mathbf{s} \odot \mathbf{\vg} -\vlambda_{t-1} \right].
\label{eq:lamda_update_CL}
\end{align}
By using a posterior approximation $q(\vw)$ and a well-defined objective, BayesBiNN enables the application of STE/Bop like methods to such challenging continual learning problems.

\begin{figure}[!t]
\centering
\includegraphics[width=1\columnwidth]{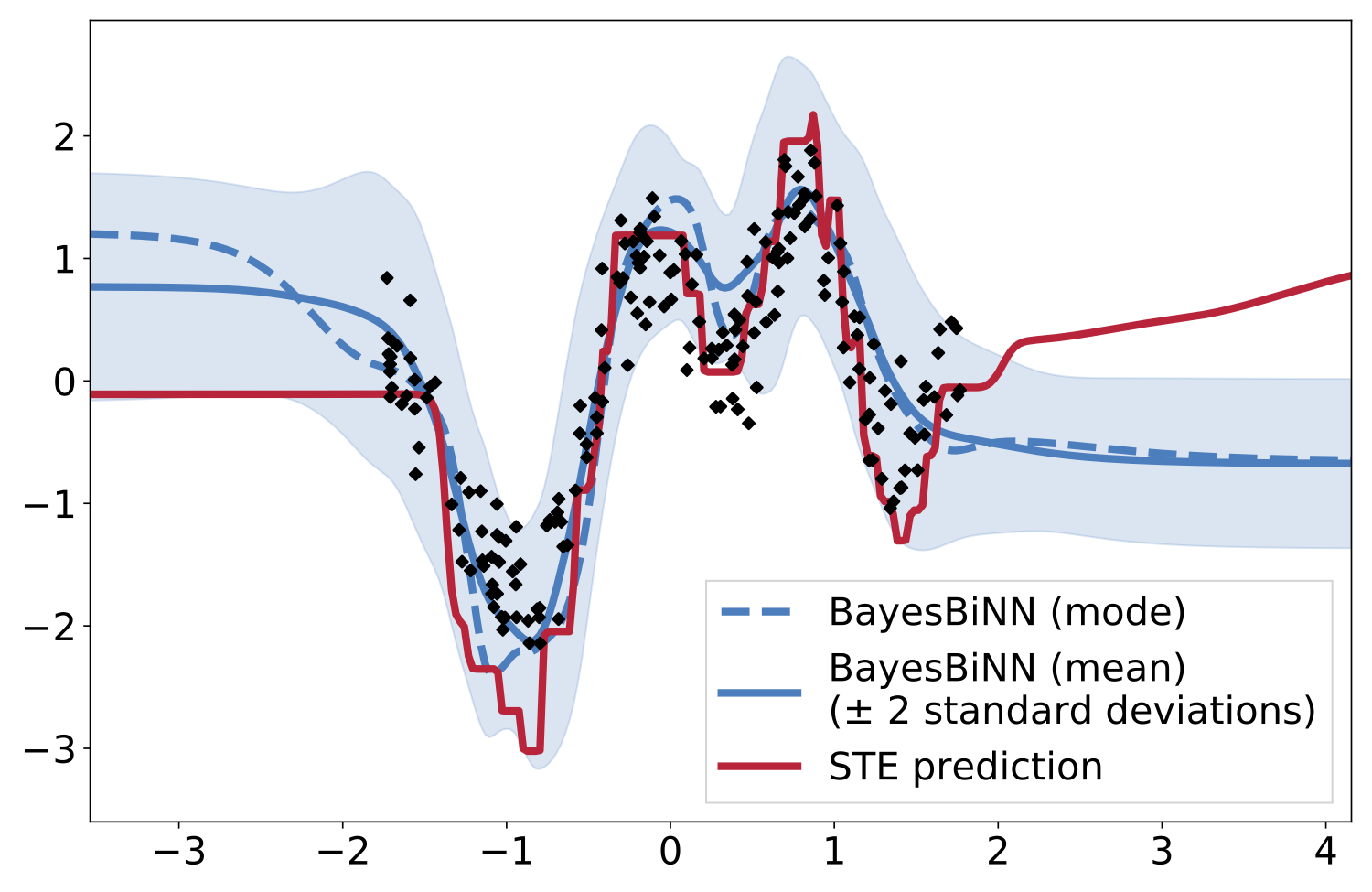}
\caption{Regression on Snelson dataset \citep{snelson2006sparsegp}.  BayesBiNN (mean) gives much smoother curve than the STE. Uncertainty is low in  areas with  plenty of data points.}
\label{fig:b2n2_regression_snelson}
\end{figure}

\section{Experimental Results}
In this section, we present numerical experiments to demonstrate the performance of BayesBiNN on both synthetic and real image data for different kinds of neural network architectures.  We also show an application of BayesBiNN to continual learning. The code to reproduce the results is available at \url{https://github.com/team-approx-bayes/BayesBiNN}. 
\label{ExperimentalSec}

\begin{figure*}[t]
\centering
\includegraphics[width=0.98\textwidth]{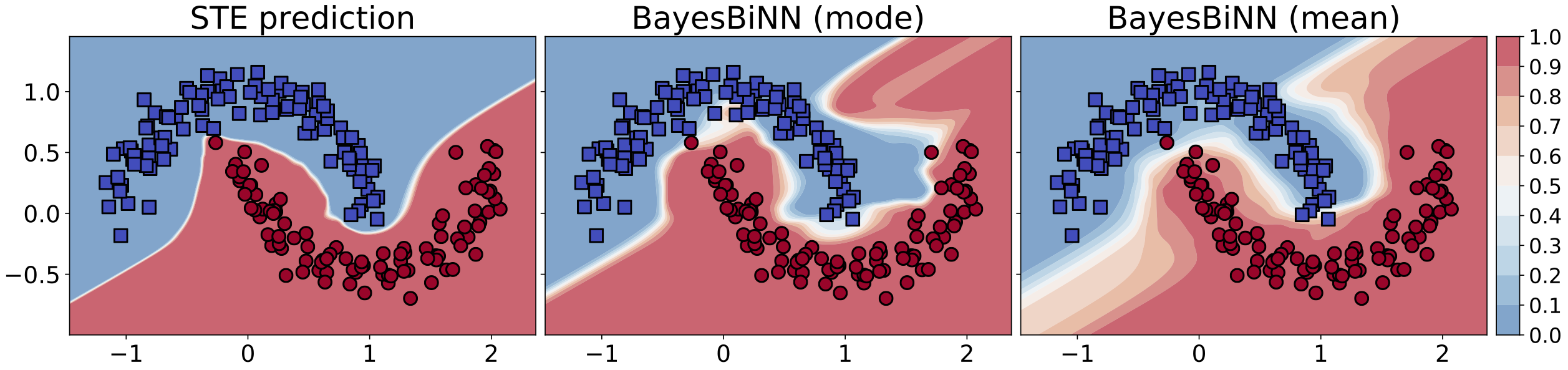}
\caption{Classification on the two moons dataset with different optimizers. From left to right:  STE, BayesBiNN using the mode, BayesBiNN using the predictive mean from 10 posterior Monte Carlo samples. STE is more overconfident than BayesBiNN, and BayesBiNN (mean) gives reasonable uncertainty in regions further away from the data.}
\label{fig:b2n2_vs_ste}
\end{figure*}

\subsection{Synthetic Data}
\label{UncertaintySec}
First, we present visualizations on toy  regression and binary classification  problems.  The STE~\citep{bengio2013estimating} algorithm is used as a baseline for which we employ Adam for training. We use a multi-layer perceptron (MLP) with two hidden layers of 64 units and $\tanh{}$ activation functions.

\paragraph{Regression} 
In \autoref{fig:b2n2_regression_snelson}, we show results for regression on the Snelson dataset~\citep{snelson2006sparsegp}. For this experiment, we add a Batch normalization (BN) \citep{ioffe2015batch} layer (but no learned gain or bias terms) after the last fully connected layer. As seen in Figure \ref{fig:b2n2_regression_snelson}, predictions obtained using `BayesBiNN (mean)' gives much smoother curves than STE, as expected. Uncertainty is lower in the areas with little noise and plenty of data points compared to areas with no data. Experimental details of the training process are provided in the Appendix~\ref{sec:settings_uncertainty}.

\begin{table*}[t]
\caption{Test accuracy of different optimizers for MNIST, CIFAR-10 and CIFAR-100 (Averaged over 5 runs). In all the three benchmark datasets, BayesBiNN achieves similar performance as STE Adam and closely approaches the performance of full-precision networks.}
\label{tab:all_results}
\begin{center}
\resizebox{1.5\columnwidth}{!}{%
\begin{tabular}{@{}llllll@{}}
\toprule
& {\bf Optimizer} & {\bf MNIST}  & {\bf CIFAR-10}  & {\bf CIFAR-100} \\
\midrule
\multirow{5}{*}{}
& STE Adam        &  $\bold{98.85} \pm 0.09$ \%   &  $\bold{93.55} \pm 0.15$ \%   &  $72.89 \pm 0.21$ \% \\
& Bop             & $98.47 \pm 0.02$ \%   & $93.00 \pm 0.11$ \%   &  $69.58 \pm 0.15$ \%  \\
& PMF             &  $98.80 \pm 0.06$ \%      &  $91.43 \pm 0.14$ \%   &  $70.45 \pm 0.25$ \% \\
& \textbf{BayesBiNN} (mode)  &  $\bold{98.86} \pm 0.05$ \%    &  $\bold{93.72} \pm0.16$ \%  & $\bold{73.68} \pm0.31$ \% \\

& \textbf{BayesBiNN} (mean)  &   $\bold{98.86} \pm 0.05$ \%   &  $\bold{93.72} \pm0.15$ \%   & $\bold{73.65} \pm0.41$ \%   \\

\cmidrule(l){2-5}
& Full-precision  &  $99.01 \pm 0.06$ \%   &  $93.90 \pm0.17$ \%   &  $74.83 \pm0.26$ \% \\

\bottomrule
\end{tabular}
}
\end{center}
\end{table*}

\paragraph{Classification} 
\autoref{fig:b2n2_vs_ste} shows STE and BayesBiNN on the two moons dataset \citep{two_moons_dataset} with 100 data points in each class.
STE (the leftmost figure) gives a point estimate of the weights and results in a fairly deterministic classifier.
When using the mode of the BayesBiNN distribution (the middle figure), the results are similar, with the fit being slightly worse but overall less overconfident, especially in the regions with no data.
Using the mean over 10 samples drawn from the posterior distribution $q(\vw)$ (the rightmost figure), we get much better uncertainty estimates as we move away from the data. Experimental details of the training process are provided in the Appendix~\ref{sec:settings_uncertainty}.

\subsection{Image Classification on Real Datasets}
\label{sec:image-classification}
We now present results on three benchmark real datasets widely used for image classification: MNIST \citep{mnist_dataset}, CIFAR-10 \citep{krizhevsky2009learning} and CIFAR-100 \citep{krizhevsky2009learning}. 
We  compare to three other optimizers\footnote{We use Bop code available at \url{https://github.com/plumerai/rethinking-bnn-optimization} and PMF code available at \url{https://github.com/tajanthan/pmf}.}: STE \citep{bengio2013estimating} using Adam with weight clipping and gradient clipping \citep{courbariaux2015binaryconnect,maddison2016concrete,alizadeh2018empirical}; latent-free Bop \citep{helwegen2019latent}; and the proximal mean-field (PMF) \citep{Ajanthan2019proximal}. An additional comparison with the LR-net method of \citet{shayer2017learning} is given in Appendix ~\ref{sec:LR-net}. 
For a fair comparison, we keep all conditions the same except for the optimization methods themselves. For our proposed BayesBiNN, we report results using both the mode and the mean. For all the experiments, standard categorical cross-entropy loss is used and we take 10\% of the training set for validation and report the best accuracy on the test set corresponding to the highest validation accuracy achieved during training. 

For MNIST, we use a multilayer perceptron (MLP) with three hidden layers with 2048 units and rectified linear units (ReLU) \citep{alizadeh2018empirical} activations. Both Batch normalization (BN) \footnote{Here the parameters of BN layers are not learned. However, they could also be learned by applying a conventional optimizer such as Adam separately, which is easy to implement.} \citep{ioffe2015batch} and dropout are used. No data augmentation is performed. For CIFAR-10 and CIFAR-100, we use the  BinaryConnect CNN network in \citet{alizadeh2018empirical}, which is a VGG-like structure similar to the one used in \citet{helwegen2019latent}. Standard data augmentation is used \citep{graham2014spatially}, where 4 pixels are padded on each side, a random 32 $\times$ 32 crop is applied, followed by a random horizontal flip. Note that no ZCA whitening is used as in \citet{courbariaux2015binaryconnect, alizadeh2018empirical}. The details of the experimental setting,  including the detailed network architecture and values of all hyper-parameters, are provided in Appendix~\ref{sec:settings_mnist_cifar} in the supplementary material. 

As shown in \autoref{tab:all_results}, the proposed BayesBiNN achieves similar performances (slightly better for CIFAR-100) as STE Adam in all the three benchmark datasets and approaches the performance of full-precision DNNs. The detailed results, such as the train/validation accuracy as well as the training curves are provided in Appendix~\ref{sec:settings_mnist_cifar} in the supplementary material.    

\begin{figure*}[htbp]
\centering
\includegraphics[width=1.0\textwidth]{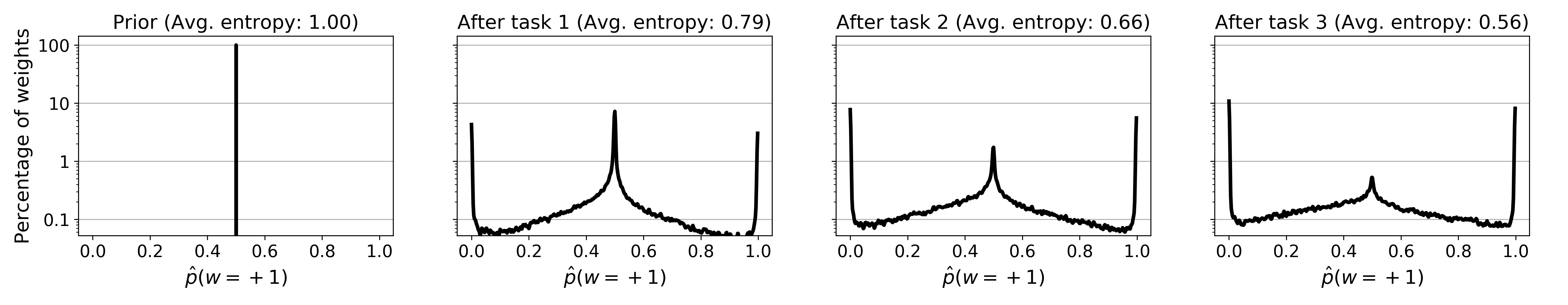}
\caption{Evolution of the distribution over the binary weights during the learning process in continual learning. The histogram of the weight probabilities ($\hat{p}\left(w_{j}=+1\right)$ for all $j$) is shown after learning on different tasks. At the very beginning, all the weights are equal to -1 or +1 with prior probability 0.5 and thus have maximum average entropy 1.0. As the number of learned tasks increases, the distribution spreads and becomes flatter, implying that the average entropy of the binary weights decreases, i.e., the weights of BiNNs become more and more deterministic.}
\label{fig:CL_evolution}
\end{figure*}

\subsection{Continual learning with binary neural networks}

We now show an application of BayesBiNN to continual learning. We consider the popular benchmark of permuted MNIST \citep{goodfellow2013empirical,Kirkpatrick2016OvercomingCF,nguyen2017variational,zenke2017continual}, where each dataset $\data_t$ consists of labeled MNIST images whose pixels are permuted randomly. Similar to \citet{nguyen2017variational}, we use a fully connected single-head network with two hidden layers containing 100 hidden units with ReLu activations. No coresets are used.  The details of the experiment, e.g., the network architecture and values of hyper-parameters, are provided in Appendix~\ref{sec:settings_continual_learning}.
\begin{figure}[h!]
\begin{center}
\includegraphics[width=1.0\columnwidth]{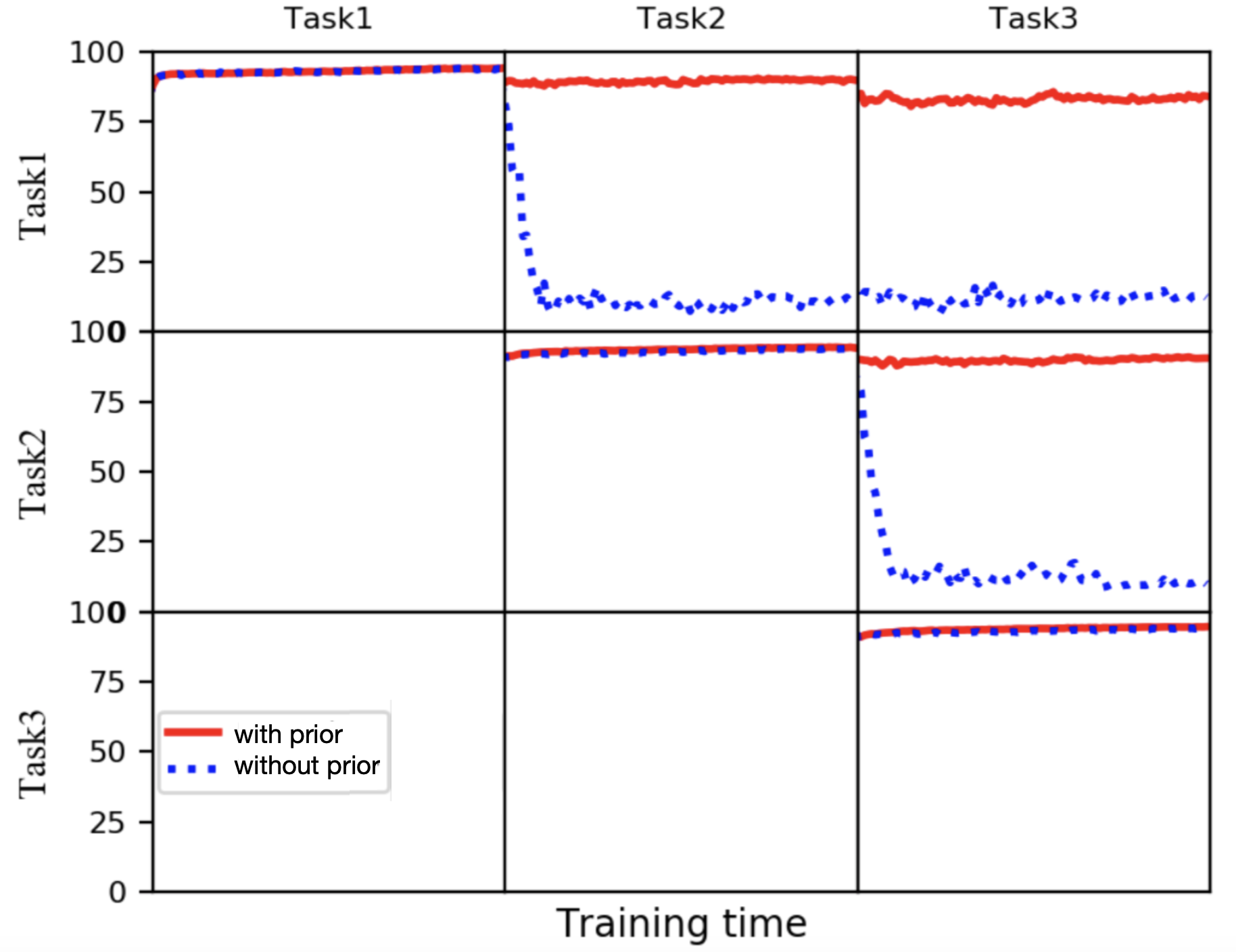}
\end{center}
\caption{Test accuracy curves of continual learning for BiNNs using BayesBiNN on permuted MNIST. The neural network is trained with or without prior for BayesBiNN, respectively. Specifically, with prior (red solid line) indicates using BayesBiNN with the posterior approximation $q_{t-1}(\vw)$ as the prior for task $t$ while without prior (blue dotted line) indicates using BayesBiNN with fixed prior where $\vlambda_0 = \boldsymbol{0}$. The test accuracy on the test set is averaged over 5 random runs. The X-axis shows the training time (epochs) and Y-axis shows the average test accuracy of different tasks as the training time increases.}
\label{fig:cl_results_bvi}
\end{figure}

As shown in \autoref{fig:cl_results_bvi}, using BayesBiNN with the posterior approximation $q_{t-1}(\vw)$ as the prior for task $t$ (red solid line), we achieve significant improvements in overcoming catastrophic forgetting of the past.
When the prior is fixed to be $\vlambda_0=\boldsymbol{0}$ (blue dotted line), the network performs badly on the past tasks, e.g., in the top row, after trained on task 2 and task 3, the network performs badly on the previous task 1.
The reason for better performance when using the posterior approximation  $q_{t-1}(\vw)$ as the prior for task $t$ is directly related to the uncertainty estimated by the posterior approximation $q_{t-1}(\vw)$. 
To visualize the uncertainty, \autoref{fig:CL_evolution} shows the histogram of the weight probabilities ($\hat{p}\left(w_{j}=+1\right)$ for all $j$). The prior probability, shown in the first plot, is set to $\frac{1}{2}$ for all weights (entropy is 1.0). As we train on more tasks, the uncertainty decreases and the weights of BiNNs become more and more deterministic (the distribution spreads and becomes flatter).
As desired, with more data, the network reduces the entropy of the distribution and the uncertainty is useful to perform continual learning.


\section{Conclusion}
Binary neural networks (BiNNs) are computation-efficient and hardware-friendly, but their training is challenging since in theory it involves a  discrete  optimization  problem. However, some gradient-based methods such as the STE work quite well in practice despite ignoring the discrete nature of the optimization problem, which is surprising and there is a lack of principled justification of their success. In this paper, we proposed a principled approach to train the binary neural networks using the Bayesian learning rule. The resulting optimizer, which we call BayesBiNN, not only justifies some of the algorithmic choices made by existing methods such as the STE and Bop but also facilitates the extensions of them, e.g., enabling uncertainty estimation for continual learning to avoid the catastrophic forgetting problem for binary neural networks. 

\newpage
\section*{Acknowledgements}
We would like to thank the anonymous reviewers for their valuable comments. X. Meng would like to thank Milad Alizadeh (Oxford University) and Koen Helwegen (Plumerai Research) for useful discussions. We are thankful for the RAIDEN computing system and its support team at the RIKEN Center for Advanced Intelligence Project which we used extensively for our experiments.

\nocite{langley00}

\bibliography{main}
\bibliographystyle{icml2020}



\onecolumn

\icmltitle{Training Binary Neural Networks using the Bayesian Learning Rule: Appendix}

\vskip 0.3in

\appendix
\section{Two equivalent forms of  hysteresis function in Bop}
\label{Sec:hyst-function}
The original update rule and the corresponding definition of the \emph{hysteresis} function $\text{hyst}(\cdot)$ in Bop are \citep{helwegen2019latent}
\begin{align}
\vw_r & \leftarrow  (1-\alpha)\vw_r + \alpha \mathbf{\vg}, \label{eq:update1}\\ 
y & =\textrm{hyst1}\left(w_{r},w_b,\gamma\right)\nonumber \\
 & \equiv \begin{cases}
-w_b & \textrm{if}\left|w_{r}\right|>\gamma \; $\&$ \; \textrm{sign}(\ensuremath{w_{r}}) = \textrm{sign}(\ensuremath{w_{b}}),\\
w_{b} & \textrm{otherwise}. 
\end{cases}
\label{eq:hyst1}
\end{align}

One could obtain an alternative update rule $\vw_r \leftarrow  (1-\alpha)\vw_r - \alpha \mathbf{\vg}$, as shown in Step 3 of Bop in \autoref{table:Psedo-code}. In this case, the update rule and the corresponding \emph{hysteresis} function are as follows 
\begin{align}
\vw_r & \leftarrow  (1-\alpha)\vw_r - \alpha \mathbf{\vg}, \label{eq:update2}\\
y & =\textrm{hyst2}\left(w_{r},w_b,\gamma\right)\nonumber \\
 & \equiv \begin{cases}
-w_b & \textrm{if}\left|w_{r}\right|>\gamma \; $\&$ \; \textrm{sign}(\ensuremath{w_{r}}) = -\textrm{sign}(\ensuremath{w_{b}}),\\
w_{b} & \textrm{otherwise}.
\end{cases}
\label{eq:hyst}
\end{align}

It could be easily verified that the above two update rules with two different representations of the \emph{hysteresis} function are equivalent to each other: The only difference between (\ref{eq:update1}) and (\ref{eq:update2}) is the sign before the gradient $\mathbf{\vg}$, i.e., the $\vw_r$ in (\ref{eq:update1}) is an exponential moving average \citep{gardner1985exponential} of $\mathbf{\vg}$ while in (\ref{eq:update2}) it is an exponential moving average of $-\mathbf{\vg}$. Such difference is compensated by the difference between (\ref{eq:hyst1}) and (\ref{eq:hyst}). The corresponding curve of $y = \textrm{hyst1}\left(w_{r},w_b,\gamma\right)$
is simply a upside-down flipped version of $y = \textrm{hyst2}\left(w_{r},w_b,\gamma\right)$, which is shown in the rightmost figure in \autoref{fig:binn_loop} (b).

\section{Experimental details}
\label{Sec:Experimental-details}
In this section we list the details for all experiments shown in the main text.

Note that after training BiNNs with BayesBiNN, there are two ways to perform inference during test time:

(1). \textbf{Mean}: One method is to use the predictive mean, where we use Monte Carlo sampling to compute the predictive probabilities for each test sample  $\vx_{j}$  as follows
\begin{align}
\hat{p}_{j,k} \approx  \frac{1}{C}\sum_{c=1}^{C}p\left(y_{j}=k|\vx_{j},\vw^{(c)}\right),
\label{eq:prediction}
\end{align}
where $\vw^{(c)}\sim q(\vw)$ are samples from the Bernoulli distributions with the natural parameters $\vlambda$  obtained by BayesBiNN. 

(2). \textbf{Mode}: The other way is simply to use the mode of the posterior distribution $q(\vw)$, i.e., the sign value of the posterior mean, i.e., $\hat{\vw } = \textrm{sign}(\tanh{\left(\vlambda\right)})$, to make predictions, which  will be denoted as  $C=0$.  

\subsection{Synthetic Data}\label{sec:settings_uncertainty}
\paragraph{Binary Classification}
We used the Two Moons dataset with 100 data points in each class and added Gaussian noise with standard deviation 0.1 to each point. We trained a Multilayer Perceptron (MLP) with two hidden layers of 64 units and tanh activation functions for 3000 epochs, using Cross Entropy as the loss function. Additional train and test settings with respect to the optimizers are detailed in \autoref{tab:settings_moons}. The learning rate $\alpha$ was decayed at fixed epochs by the specified learning rate decay rate. For the STE baseline, we used the Adam optimizer with standard settings.

\begin{table}[h]
    \caption{Train settings for the binary classification experiment using the Two Moons dataset.}
    \label{tab:settings_moons}
    \begin{center}
    \begin{tabular}{@{}lll@{}}
        \toprule
        {\bf Setting} & {\bf BayesBiNN}  & {\bf STE} \\
        \midrule
        Learning rate $\alpha$      & $10^{-3}$    & $10^{-1}$ \\
        Learning rate decay         & 0.1          & 0.1 \\
        Learning rate decay epochs  & [1500, 2500] & [1500, 2500] \\
        Momentum(s) $\beta$         & 0.99         & 0.9, 0.999 \\
        MC train samples $S$        & 5            & - \\
        MC test samples $C$         & 0/10           & - \\
        Temperature $\tau$          & 1            & - \\
        Prior $\vlambda_0$           & $\boldsymbol{0}$ & - \\
        Initialization $\lambda$    & $\pm 15$ randomly & - \\
        \bottomrule
    \end{tabular}
    \end{center}
\end{table}

\paragraph{Regression}
We used the Snelson dataset~\citep{snelson2006sparsegp} with 200 data points to train a regression model. Similar to the Binary Classification experiment, we used an MLP with two hidden layers of 64 units and tanh activation functions, but trained it for 5000 epochs using Mean Squared Error as the loss function. Additionally, we added a batch normalization layer (without learned gain or bias terms) after the last fully connected layer. The learning rate is adjusted after every epoch to slowly anneal from an initial learning rate $\alpha_0$ to a target learning rate $\alpha_T$ at the maximum epoch $T$ using
\begin{equation}\label{eq:decay_formula}
    \alpha_{t+1} = \alpha_t \left( \frac{\alpha_T}{\alpha_0} \right)^{-T}.
\end{equation}
The learning rates and other train and test settings are detailed in~\autoref{tab:settings_snelson}.

\begin{table}[h]
    \caption{Train settings for the regression experiment using the Snelson dataset~\citep{snelson2006sparsegp}.}
    \label{tab:settings_snelson}
    \begin{center}
    \begin{tabular}{@{}lll@{}}
        \toprule
         {\bf Setting} & {\bf BayesBiNN}  & {\bf STE} \\
        \midrule
        Learning rate start $\alpha_0$  & $10^{-4}$   & $10^{-1}$ \\
        Learning rate end $\alpha_T$    & $10^{-5}$   & $10^{-1}$ \\
        Momentum(s) $\beta$             & 0.99        & 0.9, 0.999 \\
        MC train samples $S$            & 1           & - \\
        MC test samples $C$             & 0/10          & - \\
        Temperature $\tau$              & 1           & - \\
        Prior $\boldsymbol{\lambda}_0$               & $\boldsymbol{0}$ & - \\
        Initialization $\boldsymbol{\lambda}$        & $\pm 10$ randomly & - \\
        \bottomrule
    \end{tabular}
    \end{center}
\end{table}

\subsection{MNIST, CIFAR-10 and CIFAR-100}\label{sec:settings_mnist_cifar}
In this section, three well-known image datasets are considered, namely the MNIST, CIFAR-10 and CIFAR-100 datasets. We compare the proposed BayesBiNN with four other popular algorithms, STE Adam, Bop and PMF for BiNNs as well as standard Adam for full-precision weights. For dataset and algorithm specific settings, see \autoref{tab:settings_all_alg}.

\paragraph{MNIST}
All algorithms have been trained using the same MLP detailed in~\autoref{tab:mlp} on mini-batches of size 100, for a maximum of 500 epochs. The loss used was Categorical Cross Entropy. We split the original training data into 90\% train and 10\% validation data and no data augmentation except normalization has been done. We report the best accuracy (averaged over 5 random runs) on the test set corresponding to the highest validation accuracy achieved during training (we do not retrain using the validation set). Note that we tune the hyper-parameters such as learning rate for all the methods including the baselines. The search space for the learning rate is set to be $\left[\text{$10^{-2}$, $3\cdot10^{-3}$, $10^{-3}$, $3\cdot10^{-4}$, $10^{-4}$, $3\cdot10^{-5}$, $10^{-5}$, $10^{-6}$}\right]$ for all methods. 
Moreover, \autoref{tab:Result-differentLR} and \autoref{tab:Result-differentTemperature} shows the results of MNIST with BayesBiNN for different choices of learning rate and temperature.

\begin{table}[h]
    \caption{The MLP architecture used in all MNIST experiments, adapted from~\citep{alizadeh2018empirical}.}
    \label{tab:mlp}
    \begin{center}
    \begin{tabular}{@{}c@{}}
        \toprule
        Dropout (p = 0.2) \\
        Fully Connected Layer (units = 2048, bias = False) \\
        ReLU \\
        Batch Normalization Layer (gain = 1, bias = 0) \\
        \midrule
        Dropout (p = 0.2) \\
        Fully Connected Layer (units = 2048, bias = False) \\
        ReLU \\
        Batch Normalization Layer (gain = 1, bias = 0) \\
        \midrule
        Dropout (p = 0.2) \\
        Fully Connected Layer (units = 2048, bias = False) \\
        ReLU \\
        Batch Normalization Layer (gain = 1, bias = 0) \\
        \midrule
        Dropout (p = 0.2) \\
        Fully Connected Layer (units = 2048, bias = False) \\
        Batch Normalization Layer (gain = 1, bias = 0) \\
        Softmax \\
        \bottomrule
    \end{tabular}
    \end{center}
\end{table}

\begin{table}[h]
    \caption{Test accuracy of MNIST for different initial learning rates. The temperature is $10^{-10}$. Results are averaged over 5 random runs.}
    \label{tab:Result-differentLR}
    \begin{center}
    \begin{tabular}{@{}ccccc@{}}
        \toprule
        Learning rate & $10^{-1}$ & $3\cdot10^{-3}$ & $10^{-3}$ & $3\cdot10^{-4}$  \tabularnewline
 
        \midrule
        Training Accuracy & $99.46 \pm 0.15$ \%  & $99.58 \pm 0.16$ \% & $99.67 \pm 0.09$ \%  & $99.76 \pm 0.09$ \%  \tabularnewline
        
        \midrule
        Validation Accuracy & $98.90 \pm 0.14$ \%  & $98.94 \pm 0.17$ \% & $98.96 \pm 0.13$ \%  & $98.97 \pm 0.12$ \%  \tabularnewline
        
         \midrule
        Test Accuracy & $98.73 \pm 0.11$ \%  & $98.81 \pm 0.07$ \% & $98.83 \pm 0.05$ \%  & $98.84 \pm 0.08$ \%  \tabularnewline
        
        \toprule
        Learning rate & $10^{-4}$ & $3\cdot10^{-5}$ & $10^{-5}$ &   $10^{-6}$ \tabularnewline
        \midrule
        Training Accuracy & $99.85 \pm 0.05$ \%  & $99.83 \pm 0.06$ \% & $99.76 \pm 0.09$ \%  & $99.78 \pm 0.03$ \%  \tabularnewline
        
        \midrule
        Validation Accuracy & $99.02 \pm 0.13$ \%  & $99.02 \pm 0.13$ \% & $99.04 \pm 0.11$ \%  & $99.02 \pm 0.17$ \%  \tabularnewline
         \midrule
        Test Accuracy & $98.86 \pm 0.05$ \%   &  $98.86 \pm 0.05$ \%  & $98.84 \pm 0.08$ \%  & $98.85 \pm 0.05$ \%  \tabularnewline
        
        \bottomrule
    \end{tabular}
    \end{center}
\end{table}

\begin{table}[h]
    \caption{Test accuracy of MNIST for different temperatures. The initial learning rate is $10^{-4}$. Results are averaged over 5 random runs.}
    \label{tab:Result-differentTemperature}
    \begin{center}
    \begin{tabular}{@{}cccccc@{}}
        \toprule
        Temperature & $10^{-3}$ & $10^{-4}$ & $10^{-5}$ & $10^{-6}$ & $10^{-7}$   \tabularnewline
        \midrule
        Training Accuracy & $89.25 \pm 0.22$ \%  & $87.55 \pm 0.50$ \% & $90.22 \pm 0.42$ \%  & $97.37 \pm 0.13$ \%  & $98.27 \pm 0.10$ \%  \tabularnewline
        \midrule
        Validation Accuracy & $90.06 \pm 1.04$ \%  & $90.28 \pm 0.43$ \% & $93.35 \pm 0.48$ \%  & $98.10 \pm 0.17$ \%  & $98.55 \pm 0.16$ \%  \tabularnewline

         \midrule
        Test Accuracy & $90.40 \pm 0.97$ \%  & $90.72 \pm 0.42$ \% & $93.67 \pm 0.50$ \%  & $98.01 \pm 0.05$ \%  & $98.41 \pm 0.10$ \%  \tabularnewline

        \toprule
        Learning rate & $10^{-8}$ & $10^{-9}$ & $10^{-10}$ &   $10^{-11}$ &   $10^{-12}$ \tabularnewline
                
        \midrule
        Training Accuracy & $99.48 \pm 0.08$ \%  & $99.75 \pm 0.14$ \% & $99.85 \pm 0.05$ \%  & $99.81 \pm 0.04$ \%  & $99.82 \pm 0.07$ \%  \tabularnewline
        
        \midrule
        Validation Accuracy & $98.92 \pm 0.13$ \%  & $99.00 \pm 0.13$ \% & $99.02 \pm 0.14$ \%  & $99.02 \pm 0.12$ \%  & $99.02 \pm 0.13$ \%  \tabularnewline
         \midrule
        Test Accuracy &  $98.82 \pm 0.05$ \%  & $98.81 \pm 0.08$ \%  & $98.86 \pm 0.05$ \% & $98.86 \pm 0.06$ \%  & $98.84 \pm 0.04$ \%  \tabularnewline
        
        \bottomrule
    \end{tabular}
    \end{center}
\end{table}

\paragraph{CIFAR-10 and CIFAR-100}
We trained all algorithms on the Convolutional Neural Network (CNN) architecture detailed in~\autoref{tab:cnn} on mini-batches of size 50, for a maximum of 500 epochs. The loss used was Categorical Cross Entropy. We split the original training data into 90\% train and 10\% validation data. For data augmentation during training, the images were normalized, a random 32 $\times$ 32 crop was selected from a 40 $\times$ 40 padded image and finally a random horizontal flip was applied. In the same manner as \citet{osawa2019practical}, we consider such data augmentation as effectively increasing the dataset size by a factor of 10 (4 images for each corner, and one central image, and the horizontal flipping step further doubles the dataset size, which gives a total factor of 10). We report the best accuracy (averaged over 5 random runs) on the test set corresponding to the highest validation accuracy achieved during training. In addition, we tune the hyper-parameters, such as the learning rate, for all the methods including the baselines. The search space for the learning rate is set to be $\left[\text{$10^{-2}$, $3\cdot10^{-3}$, $10^{-3}$, $3\cdot10^{-4}$, $10^{-4}$, $3\cdot10^{-5}$, $10^{-5}$, $10^{-6}$}\right]$ for all methods. 

\begin{table}[h]
    \caption{The CNN architecture used in all CIFAR-10 and CIFAR-100 experiments, inspired by VGG and used in \citet{alizadeh2018empirical}.}
    \label{tab:cnn}
    \begin{center}
    \begin{tabular}{@{}c@{}}
        \toprule
        Convolutional Layer (channels = 128, kernel-size = 3 $\times$ 3, bias = False, padding = same) \\
        ReLU \\
        Batch Normalization Layer (gain = 1, bias = 0) \\
        Convolutional Layer (channels = 128, kernel-size = 3 $\times$ 3, bias = False, padding = same) \\
        ReLU \\
        Max Pooling Layer (size = 2 $\times$ 2, stride = 2 $\times$ 2) \\
        Batch Normalization Layer (gain = 1, bias = 0) \\
        \midrule
        Convolutional Layer (channels = 256, kernel-size = 3 $\times$ 3, bias = False, padding = same) \\
        ReLU \\
        Batch Normalization Layer (gain = 1, bias = 0) \\
        Convolutional Layer (channels = 256, kernel-size = 3 $\times$ 3, bias = False, padding = same) \\
        ReLU \\
        Max Pooling Layer (size = 2 $\times$ 2, stride = 2 $\times$ 2) \\
        Batch Normalization Layer (gain = 1, bias = 0) \\
        \midrule
        Convolutional Layer (channels = 512, kernel-size = 3 $\times$ 3, bias = False, padding = same) \\
        ReLU \\
        Batch Normalization Layer (gain = 1, bias = 0) \\
        Convolutional Layer (channels = 512, kernel-size = 3 $\times$ 3, bias = False, padding = same) \\
        ReLU \\
        Max Pooling Layer (size = 2 $\times$ 2, stride = 2 $\times$ 2) \\
        Batch Normalization Layer (gain = 1, bias = 0) \\
        \midrule
        Fully Connected Layer (units = 1024, bias = False) \\
        ReLU \\
        Batch Normalization Layer (gain = 1, bias = 0) \\
        \midrule
        Fully Connected Layer (units = 1024, bias = False) \\
        ReLU \\
        Batch Normalization Layer (gain = 1, bias = 0) \\
        \midrule
        Fully Connected Layer (units = 1024, bias = False) \\
        Batch Normalization Layer (gain = 1, bias = 0) \\
        Softmax \\
        \bottomrule
    \end{tabular}
    \end{center}
\end{table}

\begin{table}[h]
    \caption{Algorithm specific train settings for MNIST, CIFAR-10, and CIFAR-100.}
    \label{tab:settings_all_alg}
    \begin{center}
    \begin{tabular}{@{}lllll@{}}
        \toprule
        {\bf Algorithm} & {\bf Setting} & {\bf MNIST} & {\bf CIFAR-10} & {\bf CIFAR-100} \\
        \midrule
        \multirow{8}{*}{BayesBiNN}
        & Learning rate start $\alpha_0$    & $10^{-4}$ & $3\cdot10^{-4}$ & $3\cdot10^{-4}$ \\
        & Learning rate end $\alpha_T$      & $10^{-16}$ & $10^{-16}$ & $10^{-16}$ \\
        & Learning rate decay               & Cosine & Cosine & Cosine \\
        & MC train samples $S$              & 1 & 1 & 1 \\
        & MC test samples $C$               & 0 & 0 & 0 \\
        & Temperature $\tau$                & $10^{-10}$ & $10^{-10}$ & $10^{-8}$ \\
        & Prior $\boldsymbol{\lambda}_0$                 & $\boldsymbol{0}$ & $\boldsymbol{0}$ & $\boldsymbol{0}$ \\
        & Initialization $\boldsymbol{\lambda}$          & $\pm 10$ randomly & $\pm 10$ randomly & $\pm 10$ randomly \\
        \midrule
        \multirow{5}{*}{STE Adam}
        & Learning rate start $\alpha_0$    & $10^{-2}$ & $10^{-2}$ & $10^{-2}$ \\
        & Learning rate end $\alpha_T$      & $10^{-16}$ & $10^{-16}$ & $10^{-16}$ \\
        & Learning rate decay               & Cosine & Cosine & Cosine \\
        & Gradient clipping                 & Yes & Yes & Yes \\
        & Weights clipping                  & Yes & Yes & Yes \\
        \midrule
        \multirow{5}{*}{Bop}
        & Threshold $\tau$                  & $10^{-8}$             & $10^{-8}$             & $10^{-9}$ \\
        & Adaptivity rate $\gamma$          & $10^{-5}$             & $10^{-4}$             & $10^{-4}$ \\
        & $\gamma$-decay type               & Step                  & Step                  & Step \\
        & $\gamma$-decay rate               & $10^{\frac{-3}{500}}$ & 0.1                   & 0.1 \\
        & $\gamma$-decay interval (epochs)  & 1                     & 100                   & 100  \\
        \midrule
        \multirow{7}{*}{PMF}
        & Learning rate start               & $10^{-3}$ & $10^{-2}$ & $10^{-2}$ \\
        & Learning rate decay type          & Step & Step & Step \\
        & LR decay interval (iterations)    & 7k & 30k & 30k \\
        & LR-scale                          & 0.2 & 0.2 & 0.2 \\
        & Optimizer                         & Adam & Adam & Adam \\
        & Weight decay                      & 0 & $10^{-4}$ & $10^{-4}$ \\
        & $\rho$                            & 1.2 & 1.05 & 1.05 \\
        \midrule
        \multirow{3}{*}{Adam (Full-precision)}
        & Learning rate start $\alpha_0$    & $3\cdot10^{-4}$ & $10^{-2}$ & $3\cdot10^{-3}$ \\
        & Learning rate end $\alpha_T$      & $10^{-16}$ & $10^{-16}$ & $10^{-16}$ \\
        & Learning rate decay               & Cosine & Cosine & Cosine \\
        \bottomrule
    \end{tabular}
    \end{center}
\end{table}

\begin{table*}[t]
\caption{Detailed results of different optimizers trained on MNIST, CIFAR-10 and CIFAR-100 (Averaged over 5 runs).}
\label{tab:detailed_results}
\begin{center}
\begin{tabular}{@{}lllll@{}}
\toprule
{\bf Dataset} & {\bf Optimizer} & {\bf Train Accuracy}  & {\bf Validation Accuracy}  & {\bf Test Accuracy} \\
\midrule
\multirow{6}{*}{MNIST}
& STE Adam        &  $99.78  \pm 0.10$ \%   &  $99.02 \pm 0.11$ \%   &  $\bold{98.85} \pm 0.09$ \%  \\
& Bop             &  $ 99.23 \pm 0.04$ \%   &  $98.55 \pm 0.05$ \%   &  $98.47 \pm 0.02$ \%  \\
& PMF             &       &  $99.06 \pm 0.01$ \%   &  $98.80 \pm 0.06$ \%  \\
& \textbf{BayesBiNN} (mode)  &  $ 99.85 \pm 0.05$ \%   &  $99.02 \pm 0.13$ \%   &  $\bold{98.86} \pm 0.05$ \%  \\

& \textbf{BayesBiNN} (mean)  &  $ 99.85 \pm 0.05$ \%   &  $99.02 \pm 0.13$ \%   &  $\bold{98.86} \pm 0.05$ \%  \\

\cmidrule(l){2-5}
& Full-precision  &  $ 99.96 \pm 0.02$ \%   &  $99.15 \pm 0.14$ \%   &  $99.01 \pm 0.06$ \% \\
\midrule
\multirow{6}{*}{CIFAR-10}
& STE Adam        & $99.99 \pm 0.01$ \% & $94.25 \pm 0.42$ \% & $\bold{93.55} \pm 0.15$ \% \\

& Bop             & $99.79 \pm 0.03$ \% & $93.49 \pm 0.17$ \% & $93.00 \pm 0.11$ \%\\
& PMF             &    &  $91.87 \pm 0.10$ \%   &  $91.43 \pm 0.14$ \%  \\
& \textbf{BayesBiNN} (mode)  &  $99.96 \pm0.01$ \%   &  $94.23 \pm0.41$ \%       &  $\bold{93.72} \pm0.16$ \%  \\

 & \textbf{BayesBiNN} (mean)  &  $99.96 \pm0.01$ \%   &  $94.23 \pm0.41$ \%       &  $\bold{93.72} \pm0.15$ \%  \\

\cmidrule(l){2-5}
& Full-precision   & $100.00 \pm0.00$ \%  & $94.54 \pm0.29$ \%  &$93.90 \pm0.17$ \% \\
\midrule
\multirow{6}{*}{CIFAR-100}
& STE Adam        & $99.06  \pm 0.15$ \% & $74.09 \pm 0.15$ \% & $72.89 \pm 0.21$ \%  \\
& Bop             & $90.09 \pm 0.57$ \% & $69.97 \pm 0.29$ \% & $69.58 \pm 0.15$ \% \\
& PMF             &     &  $69.86 \pm 0.08$ \%   &  $70.45 \pm 0.25$ \%  \\
& \textbf{BayesBiNN} (mode)   & $98.02 \pm 0.18$ \%  & $74.76 \pm0.41$ \%  &$\bold{73.68} \pm0.31$ \%  \\

 & \textbf{BayesBiNN} (mean)   & $98.02 \pm 0.18$ \%  & $74.76 \pm0.41$ \%  &$\bold{73.65} \pm0.41$ \%  \\

\cmidrule(l){2-5}
& Full-precision   & $99.89 \pm0.02$ \%  & $75.89 \pm0.41$ \%  &$74.83 \pm0.26$ \% \\
\bottomrule
\end{tabular}
\end{center}
\end{table*}

\begin{figure*}[h!!]
\begin{center}
\includegraphics[width = 1\textwidth]{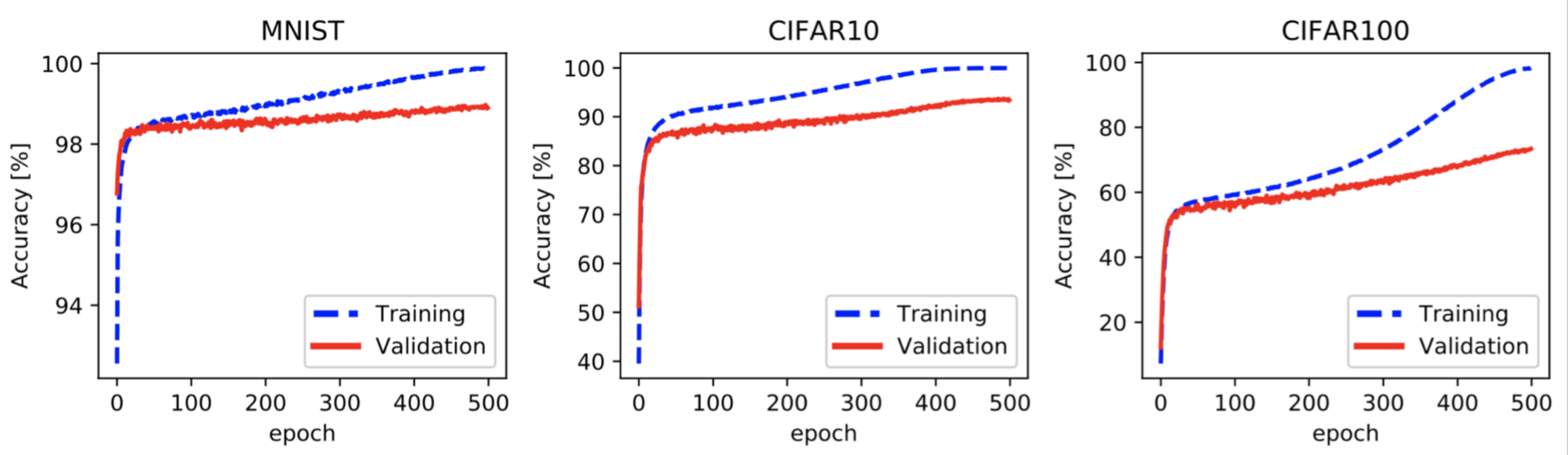}
\end{center}
\caption{Training/Validation accuracy for MNIST, CIFAR-10 and CIFAR100 with BayesBiNN optimizer (Averaged over 5 runs).}
\label{Figure_train_plots}
\end{figure*}

\subsection{Comparison with LR-net}\label{sec:LR-net}
We also compare the proposed BayesBiNN with the LR-net method in \citet{shayer2017learning} for MNIST and CIFAR-10. As the code for the LR-net is not open-source, we performed experiments with BayesBiNN following the same experimental settings in \citet{shayer2017learning} and then compared the results with the reported results in their paper. In specific, the network architectures for MNIST and CIFAR-10 are the same as \citet{shayer2017learning}, except that we added BN after the FC layers. However, we kept all layers binary and did not learn the BN parameters, nor did we use dropout as in \citet{shayer2017learning}. The dataset pre-processing follows the same settings in \citet{shayer2017learning} and is  similar to that described in \autoref{sec:image-classification}, except that there is no split of the training set into training and validation sets. As a result, as in \citet{shayer2017learning}, we report the test accuracies after 190 epochs and 290 epochs for MNIST and CIFAR-10, respectively. Note that the hyper-parameter settings of BayesBiNN are the same as those in \autoref{tab:settings_all_alg} for MNIST and CIFAR-10. 
The results are shown in ~\autoref{tab:LR-net}. The proposed BayesBiNN achieves similar performance (slightly better for CIFAR-10) to the LR-net. Note that the LR-net method used pre-trained models to initialize the weights of BiNNs, while BayesBiNN trained BiNNs from scratch without using pre-trained models.

\begin{table}[h]
    \caption{Test accuracy of BayesBiNN and LR-net trained on MNIST, CIFAR-10. Results for BayesBiNN are averaged over 5 random runs.}
    \label{tab:LR-net}
    \begin{center}
    \begin{tabular}{@{}lll@{}}
        \toprule
        {\bf Optimizer} & {\bf MNIST}  & {\bf CIFAR-10}  \\
        \midrule
        LR-net \citet{shayer2017learning}   &  $99.47$ \%      &  $93.18$\%  \\
        \textbf{BayesBiNN} (mode)  &  ${99.50} \pm 0.02$ \%    &  ${93.97} \pm0.11$ \% \\
        \bottomrule
    \end{tabular}
    \end{center}
\end{table}

\subsection{Continual learning with binary neural networks}\label{sec:settings_continual_learning}
For the continual learning experiment, we used a three-layer MLP, detailed in \autoref{tab:mlp_CL}, and trained it using the Categorical Cross Entropy loss. Specific training parameters are given in \autoref{tab:settings_CL}. There is no split of the original MNIST training data in the continual learning case. No data augmentation except normalization has been performed. 

\begin{table}[h]
    \caption{The MLP architecture used for continual learning \citep{nguyen2017variational}}
    \label{tab:mlp_CL}
    \begin{center}
    \begin{tabular}{@{}c@{}}
        \toprule
        Fully Connected Layer (units = 100, bias = False) \\
        ReLU \\
        Batch Normalization Layer (gain = 1, bias = 0) \\
        \midrule
        Fully Connected Layer (units = 100, bias = False) \\
        ReLU \\
        Batch Normalization Layer (gain = 1, bias = 0) \\
        \midrule
        Fully Connected Layer (units = 100, bias = False) \\
        ReLU \\
        Batch Normalization Layer (gain = 1, bias = 0) \\
        Softmax \\
        \bottomrule
    \end{tabular}
    \end{center}
\end{table}

\begin{table}[h]
    \caption{Algorithm specific train settings for continual learning on permuted MNIST.}
    \label{tab:settings_CL}
    \begin{center}
    \begin{tabular}{@{}lllll@{}}
        \toprule
        {\bf Algorithm} & {\bf Setting} & {\bf Permuted MNIST} \\
        \midrule
        \multirow{9}{*}{BayesBiNN}
        & Learning rate start $\alpha_0$    & $10^{-3}$ \\
        & Learning rate end $\alpha_T$      & $10^{-16}$ \\
        & Learning rate decay               & Cosine \\
        & MC train samples $S$              & 1 \\
        & MC test samples $C$               & 100\\
        & Temperature $\tau$                & $10^{-2}$ \\
        & Prior $\vlambda_0$                 & learned $\vlambda$ of the previous task\\
        & Initialization $\vlambda$          & $\pm 10$ randomly \\
        & Batch size                        & 100 \\
        & Number of epochs                  & 100\\
        \bottomrule
    \end{tabular}
    \end{center}
\end{table}

\section{Author Contributions Statement}
M.E.K. conceived the idea of training Binary neural networks using the Bayesian learning rule. X.M. derived the BayesBiNN algorithm, studied its connections to STE and Bop, and wrote the first proof-of-concept experiments. R.B. fixed a few issue with the original implementation and re-organized the PyTorch code. R.B. also designed and performed the experiments on synthetic data presented in Section 4.1. X.M. did most of the experiments with some help from R.B. All the authors were involved in writing, revising and proof-reading the paper. 
\end{document}